\documentclass[10pt,twocolumn,letterpaper]{article}

\usepackage{iccv}
\usepackage{times}
\usepackage{epsfig}

\usepackage{amsmath,amssymb,amsfonts,amsthm,dsfont} 
\usepackage{algorithm,algorithmicx,listings}        
\usepackage{graphicx,tabularx,adjustbox}            

\usepackage{multirow} 
\usepackage{booktabs} 
\usepackage{enumitem} 





\DeclareMathOperator{\tr}{tr}
\DeclareMathOperator{\diag}{diag}
\DeclareMathOperator{\adj}{adj}

\newcommand{\scaleMathLine}[2][1]{\resizebox{#1\linewidth}{!}{$\displaystyle{#2}$}}

\newcommand{\prl}[1]{\left(#1\right)}
\newcommand{\brl}[1]{\left[#1\right]}
\newcommand{\crl}[1]{\left\{#1\right\}}




\newcommand{\thickhline}{%
    \noalign {\ifnum 0=`}\fi \hrule height 1pt
    \futurelet \reserved@a \@xhline
}
\def\etal{\emph{et al.}}


\theoremstyle{definition}

\newtheorem*{definition*}{Definition}
\newtheorem*{problem*}{Problem}

\newtheorem*{proposition*}{Proposition}
\newtheorem{proposition}{Proposition}

\makeatletter
\newenvironment{proof*}[1][\proofname]{\par
  \pushQED{\qed}%
  \normalfont \partopsep=\z@skip \topsep=\z@skip
  \trivlist
  \item[\hskip\labelsep
        \itshape
    #1\@addpunct{.}]\ignorespaces
}{%
  \popQED\endtrivlist\@endpefalse
}
\makeatother


\newcommand{\calE}{{\cal E}}

\newcommand{\calN}{{\cal N}}
\newcommand{\calO}{{\cal O}}

\newcommand{\calS}{{\cal S}}

\newcommand{\calX}{{\cal X}}
\newcommand{\calY}{{\cal Y}}



\newcommand{\bfc}{\mathbf{c}}

\newcommand{\bfe}{\mathbf{e}}

\newcommand{\bfi}{\mathbf{i}}

\newcommand{\bfo}{\mathbf{o}}
\newcommand{\bfp}{\mathbf{p}}
\newcommand{\bfq}{\mathbf{q}}

\newcommand{\bfs}{\mathbf{s}}
\newcommand{\bft}{\mathbf{t}}
\newcommand{\bfu}{\mathbf{u}}
\newcommand{\bfv}{\mathbf{v}}
\newcommand{\bfw}{\mathbf{w}}
\newcommand{\bfx}{\mathbf{x}}
\newcommand{\bfy}{\mathbf{y}}
\newcommand{\bfz}{\mathbf{z}}

\newcommand{\bfepsilon}{\boldsymbol{\epsilon}}

\newcommand{\bftheta}{\boldsymbol{\theta}}

\newcommand{\bfmu}{\boldsymbol{\mu}}

\newcommand{\bfpi}{\boldsymbol{\pi}}
\newcommand{\bfrho}{\boldsymbol{\rho}}
\newcommand{\bfsigma}{\boldsymbol{\sigma}}

\newcommand{\bfphi}{\boldsymbol{\phi}}

\newcommand{\bfxi}{\boldsymbol{\xi}}

\newcommand{\bfA}{\mathbf{A}}

\newcommand{\bfC}{\mathbf{C}}

\newcommand{\bfE}{\mathbf{E}}

\newcommand{\bfH}{\mathbf{H}}
\newcommand{\bfI}{\mathbf{I}}

\newcommand{\bfM}{\mathbf{M}}

\newcommand{\bfP}{\mathbf{P}}
\newcommand{\bfQ}{\mathbf{Q}}
\newcommand{\bfR}{\mathbf{R}}

\newcommand{\bfT}{\mathbf{T}}
\newcommand{\bfU}{\mathbf{U}}
\newcommand{\bfV}{\mathbf{V}}

\newcommand{\bfY}{\mathbf{Y}}


\newcommand{\bbR}{\mathbb{R}}


\iccvfinalcopy 


\makeatletter
\def\thanks#1{\protected@xdef\@thanks{\@thanks
        \protect\footnotetext{#1}}}
\makeatother

\ificcvfinal\pagestyle{empty}\fi
\begin{document}

\title{ELLIPSDF: Joint Object Pose and Shape Optimization with a Bi-level Ellipsoid and Signed Distance Function Description\thanks{We gratefully acknowledge support from ARL DCIST CRA W911NF-17-2-0181 and NSF RI IIS-2007141.}}

\author{Mo Shan, Qiaojun Feng, You-Yi Jau, Nikolay Atanasov\\
University of California San Diego\\
{\tt\small \{moshan,qjfeng,yjau,natanasov\}@ucsd.edu}
}

\maketitle
\thispagestyle{empty}

\begin{abstract}
Autonomous systems need to understand the semantics and geometry of their surroundings in order to comprehend and safely execute object-level task specifications. 
This paper proposes an expressive yet compact model for joint object pose and shape optimization, 
and an associated optimization algorithm to infer an object-level map from multi-view RGB-D camera observations. 
The model is expressive because it captures the identities, positions, orientations, and shapes of objects in the environment. 
It is compact because it relies on a low-dimensional latent representation of implicit object shape, allowing onboard storage of large multi-category object maps. 
Different from other works that rely on a single object representation format, our approach has a bi-level object model that captures both the coarse level scale as well as the fine level shape details. 
Our approach is evaluated on the large-scale real-world ScanNet dataset and compared against state-of-the-art methods.
\end{abstract}

\section{Introduction}
\label{sec:introduction}

Range sensors, such as RGB-D cameras and LIDARs, have become a primary data source for robot localization and mapping due to their increasing accuracy, affordability, and compactness. This has contributed to the development of RGB-D Simultaneous Localization And Mapping (SLAM) \cite{kerl2013dense, newcombe2011kinectfusion, palazzolo2019refusion, wang2019real} and Structure from Motion (SfM) \cite{agarwal2011building, crandall2011discrete, schonberger2016structure} approaches that provide accurate and efficient ego-motion estimation and map reconstruction. The map representations used in RGB-D SLAM, however, are predominantly geometric, composed of point landmarks \cite{kerl2013dense, sunderhauf2017meaningful}, surfels \cite{whelan2016elasticfusion} or explicit (mesh) and implicit (signed distance field) surface representations \cite{newcombe2011kinectfusion, rosinol2019kimera}. These geometric models do not provide semantic information such as the class, pose, shape, or affordances of objects in the scene. Maps that combine geometric and semantic information are useful and understandable for humans and allow specification of symbolic tasks, such as retrieval, object-directed navigation, grasping, and safety critical operation, in terms of object entities.

Recent works that focus on object-level localization and mapping include~\cite{pronobis2011semantic,salas2013slam++, mu2016slam, sunderhauf2017meaningful, mccormac2018fusion++, hu2019deep}, which utilize objects as landmarks for localization and navigation~\cite{salas2013slam++, Atanasov_SemanticLocalization_IJRR15, mu2016slam, sunderhauf2017meaningful, mccormac2018fusion++, hu2019deep} or as functional entities for motion, part, and affordance identification~\cite{lu2018beyond, qiu2019tracking, mo2019partnet, Luo_2020_ICLR}. The memory and computational efficiency of the object representations used by semantic SLAM are vital for accommodating online construction, onboard storage, and multi-robot use of large semantic maps. On one hand, a parsimonious way for optimizing and storing object maps is needed to ensure online computation and low onboard memory use. On the other hand, it is desirable to preserve as many details about the object shapes, texture, and affordances as possible. Striking the right balance between a faithful object reconstruction and a compact object representation remains an open research problem.

\begin{figure}[t]
  \centering
  \includegraphics[width=1\linewidth]{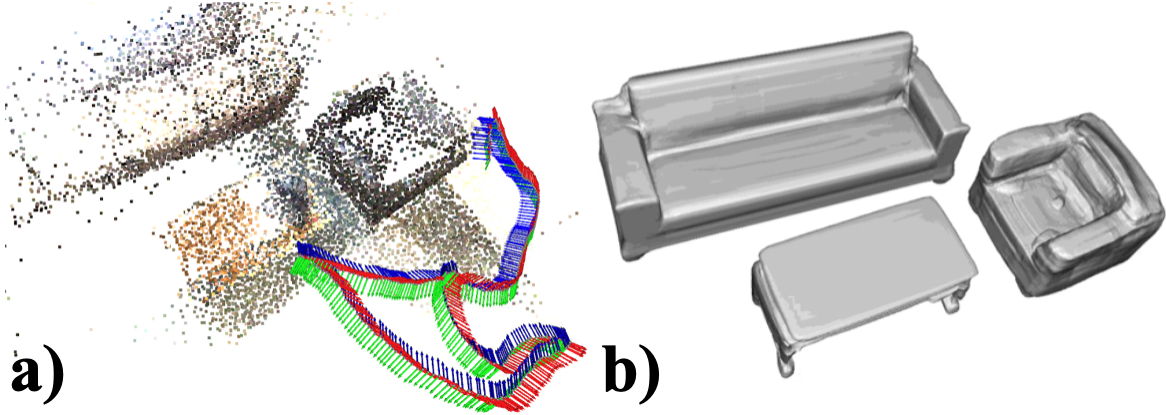}
  \caption{Overview of ELLIPSDF: a) Ground-truth scene reconstruction from colored point clouds in ScanNet~\cite{dai2017scannet} scene $0087$, where the RGB axes show the camera trajectory, b) Reconstructed object meshes in the world frame using the SDF model decoded from a latent code, and the optimized \text{SIM}(3) transformation representing object pose.}
  \label{fig:overview}
\end{figure}

This paper proposes ELLIPSDF, which is an expressive yet compact model of object pose and shape, and an associated optimization algorithm to infer an object-level map from multi-view RGB-D camera observations, as shown in Fig.~\ref{fig:overview}. ELLIPSDF is expressive because it captures the identity, scale, position, orientation, and shape of objects in the environment. It is compact because it relies on a low-dimensional latent encoding of the signed distance function (SDF) to an object's surface, allowing onboard storage of large multi-category object maps.

Shape representation using SDF predicted by an autodecoder network was proposed in DeepSDF~\cite{park2019deepsdf} and DualSDF~\cite{hao2020dualsdf}. In this paper, we extend the SDF prediction network in prior works by proposing a bi-level object model with a shared latent representation. 
Object primitive shapes and SDF are predicted from a shared latent space.
On the coarse-level, an ellipsoid is used as a primitive shape to constrain the overall shape scale. On the fine-level, an autodecoder similar to DeepSDF is used to preserve the object shape details. 
To summarize, the main \textbf{contribution} of this work is the design of
\begin{itemize}
  \item a bi-level object model with coarse and fine levels, enabling joint optimization of object pose and shape. The coarse-level uses a primitive shape for robust pose and scale initialization, and the fine-level uses SDF residual directly to allow accurate shape modeling. The wo levels are coupled via a shared latent space.
  
  \item a cost function to measure the mismatch between the bi-level object model and the segmented RGB-D observations in the world frame. 
\end{itemize}

\section{Related Work}
\label{sec:review}


Several RGB-D SLAM approaches \cite{newcombe2011kinectfusion, endres2012evaluation, kerl2013dense, endres20133, whelan2016elasticfusion} are able to generate accurate trajectory and a dense 3D model of the environment. However, early RGB-D SLAM techniques focus on obtaining a geometric map and overlook the semantics. 
Later, object-level SLAM approaches \cite{nicholson2018quadricslam, yang2019cubeslam} are proposed to model both geometry and semantics. Those works focus on estimating the object pose accurately, but have limited capabilities to model object shape details due to the very simple geometric shape models used, such as cuboids and quadrics.  


Compared with other similar works~\cite{Mescheder_2019_CVPR, Chen_2019_CVPR} on learning implicit function for surface, DeepSDF \cite{park2019deepsdf} learns a continuous metric function of distance instead of binary classification function dividing inside or outside, which makes it suitable for gradient-based optimization in SLAM. 
Subsequent works along the direction of DeepSDF include FroDO \cite{runz2020frodo}, MOLTR \cite{li2020mo}, and DualSDF \cite{hao2020dualsdf}. 
FroDO leverages both point cloud and SDF representations, which defines sparse and dense losses to optimize the object shape. 
An extension of FroDO is MOLTR, which reconstructs an object shape by fusing multiple single-view shape codes to handle both static and dynamic objects. 
Similar to the coarse-to-fine shape estimation in FroDO and MOLTR, DualSDF uses two levels of granularity to represent 3D shapes. A shared latent space is employed to tightly couple the two levels, and a Gaussian prior is imposed on the latent space to enable sampling, interpolation, and optimization-based manipulation.  
DeepSDF and the derivatives offer models for accurate shape modeling but few of them consider object pose estimation. 

Object pose estimation is a critical step in the construction of an object level map. 
To estimate the transformation between world frame and the object frame, Scan2CAD \cite{avetisyan2019scan2cad} estimates the object pose and scale by establishing keypoint correspondences between the objects in the scene and their 3D CAD models. The keypoints are annotated for the CAD models and predicted by CNNs during inference. The Harris keypoints are detected from the 3D scan to be matched with those keypoints on the CAD models. However, both keypoint annotation and model retrieval take a long time for objects with complicated shapes, such as sofa. Later on Avetisyan \etal\cite{avetisyan2019end} dramatically increased the efficiency of the alignment process by utilizing a novel differentiable Procrustes alignment loss. Firstly, a proposed 3D CNN is used to identify objects in the 3D scan. Secondly, object bounding boxes are used to establish correspondence between scan objects and the CAD models. Lastly, alignment-informed correspondences are learnt via the differentiable Procrustes alignment loss.
Furthermore, multi-view constraints are introduced in Vid2CAD \cite{maninis2020vid2cad}. 

In the proposed ELLIPSDF, a learnt continuous SDF is used to reconstruct the object at arbitrary resolutions, and thus our approach has a more expressive object model in comparison to \cite{hu2019deep, sucar2020nodeslam}. 
Furthermore, our model has two levels of granularity that provide a coarse object prior to optimize the object scale, which is different from FroDO or \cite{afolabi2020extending}. 
Our system is online and more efficient, and unlike prior works that focus on single object estimation, we also present a large-scale, quantitative evaluation using a public benchmark that has multiple objects. 

\section{Background}
\label{sec:background}


Rigid body orientation, pose, and similarity are represented using the $\text{SO}(3)$, $\text{SE}(3)$, and $\text{SIM}(3)$ Lie groups, respectively, defined as:
\begin{gather}
\label{eq:LieGroups}
\scaleMathLine{\begin{aligned}
\text{SO}(3) &\triangleq \crl{ \bfR \in \bbR^{3 \times 3} \mid \bfR^\top\bfR = \bfI, \det(\bfR) = 1},\\
\text{SE}(3) &\triangleq \crl{ \begin{bmatrix} \bfR & \bft\\\mathbf{0}^\top & 1 \end{bmatrix} \in \bbR^{4 \times 4} \,\bigg\vert\, \bfR \in SO(3), \bft \in \bbR^3}, \\
\text{SIM}(3) &\triangleq \crl{ \begin{bmatrix} s\bfR & \bft\\\mathbf{0}^\top & 1 \end{bmatrix} \in \bbR^{4 \times 4} \,\bigg\vert\, \bfR \in SO(3), \bft \in \bbR^3, s \in \bbR}.
\end{aligned}}
\raisetag{10ex}
\end{gather}
We overload $\bfxi_\times$ to denote a mapping from a vector in $\bbR^3$ or $\bbR^6$ or $\bbR^7$ to the Lie algebra $\mathfrak{so}(3)$, $\mathfrak{se}(3)$, or $\mathfrak{sim}(3)$, associated with the Lie groups in \eqref{eq:LieGroups}, defined as:
\begin{gather}
\label{eq:LieAlgebras}
\begin{aligned}
\mathfrak{so}(3) &\triangleq \crl{\bfxi_\times = \begin{bmatrix}0 & -\xi_3 & \xi_2\\\xi_3 & 0 & -\xi_1\\-\xi_2 & \xi_1 & 0 \end{bmatrix} \,\bigg\vert\, \bfxi \in \bbR^3},\\
\mathfrak{se}(3) &\triangleq \crl{\bfxi_\times = \begin{bmatrix} \bftheta_\times & \bfrho \\ \mathbf{0}^\top & 0 \end{bmatrix}\,\bigg\vert\, \bfxi = \begin{bmatrix} \bfrho \\ \bftheta\end{bmatrix} \in \bbR^6},\\
\mathfrak{sim}(3) &\triangleq \crl{\bfxi_\times = \begin{bmatrix} \sigma \bfI + \bftheta_\times & \bfrho \\ \mathbf{0}^\top & 0 \end{bmatrix} \,\bigg\vert\, \bfxi = \begin{bmatrix} \bfrho \\ \bftheta \\ \sigma \end{bmatrix} \in \bbR^7}.
\end{aligned}
\raisetag{12ex}
\end{gather}
We define an infinitesimal change of a Lie group element $\bfT$ via a left perturbation $\exp\prl{\bfxi_\times}\bfT$, using the exponential map $\exp\prl{\bfxi_\times}$ to retract a Lie algebra element $\bfxi_\times$ to the Lie group. Please refer to \cite[Ch.7]{BarfootBook} or \cite{Gao2017SLAM} for details. 

The coarse shape of a rigid body is represented using a \emph{quadric shape} \cite[Ch.3]{MVGBook}, $\crl{ \bfx \in \bbR^3 \mid \underline{\bfx}^\top \bfQ \underline{\bfx} \leq 0}$, where $\underline{\bfx} \triangleq [\bfx^\top, 1]^\top$ denotes the homogeneous coordinates of $\bfx$ and $\bfQ \in \bbR^{4 \times 4}$ is a symmetric matrix. An axis-aligned ellipsoid centered at the origin:
\begin{equation}
\label{eq:ellipsoid}
\mathcal{E}_{\bfu} \triangleq \crl{\bfx \in \bbR^3 \mid \bfx^\top \bfU^{-\top}\bfU^{-1}\bfx \leq 1},
\end{equation}
where $\bfU \triangleq \diag(\bfu)$ and the elements of the vector $\bfu \in \bbR^3$ specify the lengths of the semi-axes of $\mathcal{E}_{\bfu}$. An ellipsoid $\mathcal{E}_{\bfu}$ is a special case of a quadric shape with $\bfQ = \diag(\bfU^{-2},-1)$. 
A quadric shape can also be defined in dual form, as the set of planes $\underline{\boldsymbol{\pi}} = \bfQ\underline{\bfx}$ that are tangent to the shape surface at each $\bfx$. This dual quadric surface definition is $\crl{ \bfpi \in \bbR^3 \mid \underline{\bfpi}^\top \bfQ^* \underline{\bfpi} = 0}$, where $\bfQ^* = \adj(\bfQ)$ is the adjugate of $\bfQ$.
A dual quadric defined by $\bfQ^*$ can be scaled, rotated, or translated by a similarity transform $\bfT \in \text{SIM}(3)$ as $\bfT \bfQ^* \bfT^\top$. Similarity, a dual quadric can be projected to a lower-dimensional space by a projection matrix $\bfP = \begin{bmatrix} \bfI & \mathbf{0} \end{bmatrix}$ as $\bfP \bfQ^* \bfP^\top$.

The fine shape of a rigid body is represented as $\crl{\bfx \in \bbR^3 \mid f(\bfx) \leq 0}$ using the \emph{signed distance field} of a set $\calS \subset \bbR^3$:
\begin{equation}
f(\bfx) = \begin{cases}
  -d(\bfx,\partial\calS), & \bfx \in \calS,\\
  \phantom{-}d(\bfx,\partial\calS), & \bfx \notin \calS,
\end{cases}
\end{equation}
where $d(\bfx,\partial\calS)$ denotes the Euclidean distance from a point $\bfx \in \bbR^3$ to the boundary $\partial\calS$ of $\calS$.

\section{Problem Formulation}
\label{sec:problem}

Consider an RGB-D camera whose optical frame has pose $\bfC_k \in \text{SE}(3)$ with respect to the global frame at discrete time steps $k = 1,\ldots,K$. Assume that the camera is calibrated and its pose trajectory $\crl{\bfC_k}_k$ is known, e.g., from a SLAM or SfM algorithm. At time $k$, the camera provides an RGB image $I_k : \Omega^2 \mapsto \mathbb{R}_{\geq 0}^3$ and a depth image $D_k : \Omega^2 \mapsto \mathbb{R}_{\geq 0}$ such that $I_k(\bfp)$ and $D_k(\bfp)$ are the color and depth of a pixel $\bfp \in \Omega^2$ in normalized pixel coordinates. The camera moves in an unknown environment that contains $N$ objects $\calO \triangleq \crl{\bfo_n}_{n=1}^N$. Each object $\bfo_n = (\bfc_n,\bfi_n)$ is an instance $\bfi_n$ of class $\bfc_n$, defined below.

\begin{definition*}
An \emph{object class} is a tuple $\bfc \triangleq \prl{\nu, \bfz, f_{\bftheta}, g_{\bfphi}}$, where $\nu \in \mathbb{N}$ is the class identity, e.g., chair, table, sofa, and $\bfz \in \mathbb{R}^d$ is a latent code vector, encoding the average class shape. The class shape is represented in a canonical coordinate frame at two levels of granularity: coarse and fine. The coarse shape is specified by an ellipsoid $\calE_\bfu$ in \eqref{eq:ellipsoid} with semi-axis lengths $\bfu = g_{\bfphi}(\bfz)$ decoded from the latent code $\bfz$ via a function $g_{\bfphi} : \bbR^d \mapsto \bbR^3$ with parameters $\bfphi$. The fine shape is specified by the signed distance $f_{\bftheta}(\bfx,\bfz)$ from any $\bfx \in \bbR^3$ to the average shape surface, decoded from the latent code $\bfz$ via a function $f_{\bftheta} : \bbR^3 \times \bbR^d \mapsto \bbR$ with parameters $\bftheta$.
\end{definition*}

\begin{definition*}
An \emph{object instance} of class $\bfc$ is a tuple $\bfi \triangleq \prl{\bfT, \delta\bfz}$, where $\bfT \in \text{SIM}(3)$ specifies the transformation from the global frame to the object instance frame, and $\delta\bfz \in \bbR^d$ is a deformation of the latent code $\bfz$, specifying the average shape of class $\bfc$.
\end{definition*}

\begin{figure*}[t] 
  \centering
  \includegraphics[width=\linewidth]{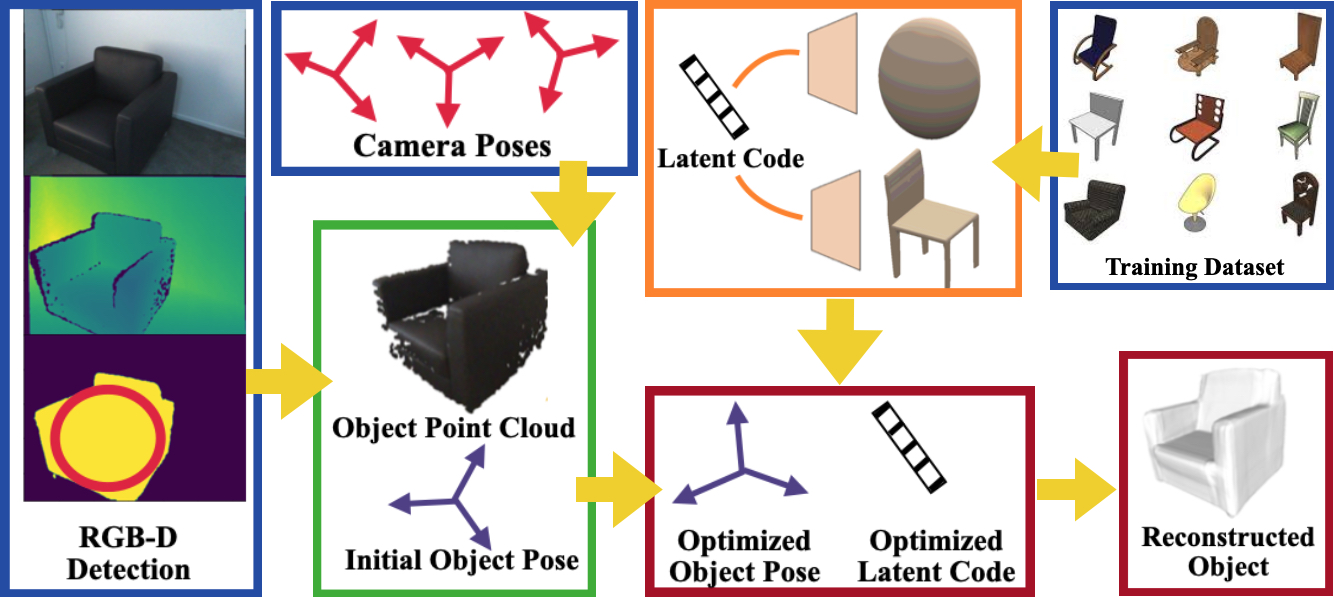}
  \caption{ELLIPSDF Overview: A point cloud and initial pose (\textit{green}) are obtained from RGB-D detections of a chair instance from known camera poses (\textit{blue}). A bi-level category shape description, consisting of a latent shape code, a coarse shape decoder, and a fine shape decoder (\textit{orange}), is trained offline using a dataset of mesh models. Given the observed point cloud, the pose and shape deformation of the newly seen instance are optimized jointly online, achieving shape reconstruction in the global frame (\textit{red}).}
  \label{fig:framework}
\end{figure*}

We assume that object detection (e.g., \cite{Cai2019Cascade}) and tracking (e.g., \cite{bewley2016simple}) algorithms are available to provide the class $\bfc_n$ and pixel-wise segmentation $\Omega_{n,k}^2 \subseteq \Omega^2$ of any object $n$ observed by the camera at time $k$. 
Our goal is to estimate the transformation and shape $\bfi_n := (\bfT_n, \delta \bfz_n)$ of each observed object $n$. We consider object instances independently and drop the subscript $n$ when it is clear from the context. 


Given an object with multi-view segmentation $\Omega_{k}^2$, we use the depth $D_k(\bfp)$ of each pixel $\bfp \in \Omega_{k}^2$ to obtain a set of points $\calX_k(\bfp)$ along the ray starting from the camera optical center and passing through $\bfp$. The sets $\calX_k(\bfp)$ is used to optimize the pose and shape of the object instance. For each ray, we choose three points, one lying on the observed surface, one a small distance $\epsilon>0$ in front of the surface, and one a small distance $\epsilon$ behind. Given $d \in \{0,\pm \epsilon\}$, we obtain points $\bfy \in \bbR^3$ in the optical frame corresponding to the pixels $\bfp \in \Omega_{k}^2$:
\begin{equation*}
\scaleMathLine{\calY_k(\bfp) \triangleq \crl{(\bfy, d) \,\bigg\vert\, \bfy = \prl{D_k(\bfp) + \frac{d}{\|\underline{\bfp}\|}}\underline{\bfp}, \; d \in \{0,\pm \epsilon\}},}
\end{equation*}
and project them to the global frame using the known camera pose $\bfC_k$:
\begin{equation}\label{eq:distance_measurements}
\calX_k(\bfp) \triangleq \crl{(\bfx, d) \,\bigg\vert\, \bfx = \bfP \bfC_k \underline{\bfy}, \; (\bfy,d) \in \calY_k(\bfp)}.
\end{equation}
%

We define an error function $e_{\bfphi}$ to measure the discrepancy between a distance-labelled point $(\bfx,d) \in \calX_{k}(\bfp)$ observed close to the instance surface and the coarse shape $\calE_{\bfu}$ provided by $\bfu = g_{\bfphi}(\bfz)$. Another error function $e_{\bftheta}$ is used for the difference between $(\bfx,d)$ and the SDF value $f_{\bftheta}(\bfx, \bfz)$ predicted by the fine shape model. The overall error function is defined as: 
\begin{align}
\label{eq:cost_function}
&e(\bfT,\delta \bfz, \bftheta, \bfphi ; \crl{\calX_k(\bfp)}) \triangleq \alpha e_r(\delta \bfz) \\
&+ \sum_{k=1}^K 
      \sum_{\bfp \in \Omega_{k}^2}
      \sum_{(\bfx,d) \in \calX_k(\bfp)} \!\!\!\beta e_{\bftheta}(\bfx,d,\bfT,\delta \bfz) + \gamma  e_{\bfphi}(\bfx,d,\bfT,\delta \bfz)\notag,
\end{align}
where $e_r(\delta \bfz)$ is a shape deformation regularization term. The coarse-shape error, $e_{\bftheta}$, fine-shape error, $e_{\bfphi}$, and the regularization, $e_r$ are defined precisely in Sec. \ref{sec:train_code}.

We distinguish between a training phase, where we optimize the parameters $\bfz$, $\bftheta$, $\bfphi$ of an object class using offline data from instances with known mesh shapes, and a testing phase, where we optimize the pose $\bfT$ and shape deformation $\delta \bfz$ of a previously unseen instance from the same category using online distance data from an RGB-D camera.

In training, we generate points $\crl{\calX_{n,k}(\bfp)}$ close to the surface of each available mesh model $n$ in a canonical coordinate frame (with identity pose $\bfI_4$) and optimize the class shape parameters via:
\begin{equation}
\min_{\crl{\delta \bfz_n}, \bftheta, \bfphi} \sum_n e(\bfI_4,\delta \bfz_n, \bftheta, \bfphi ; \crl{\calX_{n,k}(\bfp)}).
\end{equation}


In testing, we receive points $\crl{\calX_{k}(\bfp)}$ in the global frame, generated by the RGB-D camera from the surface of a previously unseen instance. Assuming known object class, we fix the trained shape parameters $\bfz^*$, $\bftheta^*$, $\bfphi^*$ and optimize the unknown instance transform $\bfT \in \text{SIM}(3)$ and shape deformation $\delta \bfz \in \bbR^d$:
\begin{equation} \label{eq:test_optimization}
\min_{\bfT, \delta \bfz} \; e(\bfT,\delta \bfz, \bftheta^*, \bfphi^* ; \crl{\calX_k(\bfp)}).
\end{equation}

\section{Object Pose and Shape Optimization}
\label{sec:reconstruction}

This section develops ELLIPSDF, an autodecoder model for bi-level object shape representation. Sec.~\ref{sec:train_code} presents the model and defines the error functions for its parameter optimization. Sec.~\ref{sec:shape_pose_inference} describes how a trained ELLIPSDF model is used at test time for multi-view joint optimization of object pose and shape. An overview is shown in Fig.~\ref{fig:framework}.




\subsection{Training an ELLIPSDF Model}
\label{sec:train_code}


{\vspace{1ex}\bf \noindent Bi-level Shape Representation: }%
The ELLIPSDF shape model consists of two autodecoders $g_{\bfphi}(\bfz)$ and $f_{\bftheta}(\bfx,\bfz)$, using a shared latent code $\bfz \in \bbR^d$. The first autodecoder provides a \emph{coarse} shape representation with parameters $\bfphi$, as an axis-aligned ellipsoid $\calE_{\bfu}$ in a canonical coordinate frame with semi-axis lengths $\bfu = g_{\bfphi}(\bfz)$. The second autoencoder provides a \emph{fine} shape representation with parameters $\bftheta$, as an implicit SDF surface $\crl{\bfx \in \bbR^3 \mid f_{\bftheta}(\bfx,\bfz) \leq 0}$ in the same canonical coordinate frame. We implement $g_{\bfphi}(\bfz)$ and $f_{\bftheta}(\bfx,\bfz)$ as $8$-layer perceptrons with one cross-connection, as described in Sec.~D in the supplementary material of DualSDF \cite{hao2020dualsdf}. The reparametrization trick \cite{kingma2013auto} is used to maintain a Gaussian distribution $\bfz = \bfmu + \diag(\bfsigma) \bfepsilon$ over the latent code with $\bfepsilon \sim \mathcal{N}(\bf{0},\bf{I})$. Thus, at training time, the ELLIPSDF model parameters are the mean $\bfmu \in \bbR^d$ and standard deviation $\bfsigma \in \bbR^d$ of the latent shape code and the coarse and fine shape autodecoder parameters $\bfphi$ and $\bftheta$. The model is visualized in Fig.~\ref{fig:two_level_model}. 





{\vspace{1ex}\bf \noindent Error Functions: }%
We introduce error terms that play a key role for optimizing the category-level latent code $\bfz$ and decoder parameters $\bftheta$, $\bfphi$, during training time, as well as the transformation $\bfT$ from the global frame to the canonical object frame and the latent code deformation $\delta\bfz$ of a particular instance during test time. The training data for an ELLIPSDF model consists of distance-labeled point clouds $\calX_{n,k}(\bfp)$ associated with instances $n$ from the same class, as introduced in Sec.~\ref{sec:problem}. A different latent code $\bfz_n$ is optimized for each instance $n$, while the decoder parameters $\bftheta$ and $\bfphi$ are common for all instances of the same class. 





The fine-level shape error function $e_{\bftheta}(\bfx,d,\bfT,\delta\bfz)$ of a point $\bfx$ in global coordinates  with signed distance label $d$ is defined as:
\begin{equation}
  \label{eq:e_k}
  e_{\bftheta}(\bfx,d,\bfT,\delta\bfz) \triangleq \rho(s f_{\bftheta}(\bfP\bfT \underline{\bfx}; \bfz+\delta\bfz) - d).
\end{equation}
In the definition above, the point $\bfx$ is first transformed to the object coordinate frame via $\bfP\bfT \underline{\bfx}$ and the fine-shape model $f_{\bftheta}$ is queried with the instance shape code $\bfz + \delta\bfz$ to predict the SDF to the object surface. Since SDF values vary proportionally with scaling \cite{afolabi2020extending}, the returned value is scaled back by $s$ before measuring its discrepancy with the label $d$. Instead of measuring the difference between $s f_{\bftheta}$ and $d$ in absolute value, we employ a Huber term \cite{Huber1964Robust} to make the error function robust against outliers:
\begin{equation}
\label{eq:huber_loss}
\rho(r) \triangleq 
\begin{cases}
\frac{1}{2}r^2 & |r|\leq \delta,\\
\delta(|r|-\frac{1}{2}\delta) & \text{else}.
\end{cases}
\end{equation}
Note that the error $e_{\bftheta}$ relates both the object pose and shape to the SDF residual, which is unique to our formulation and enables their joint optimization.

The coarse-level shape error function $e_{\bfphi}(\bfx,d,\bfT,\delta\bfz)$ is defined similarly, using a signed distance function for the coarse shape. Since the coarse shape decoder, $\bfu = g_{\bfphi}(\bfz)$, provides an explicit ellipsoid description, we first need a conversion to SDF before we can define the error term. An approximation of the SDF of an ellipsoid $\calE_{\bfu}$ with semi-axis lengths $\bfu$ can be obtained as:
\begin{equation}
  \label{eq:ellpsoid_sdf}
  h\left(\bfx, \bfu\right)
  =
  \frac{\left\|\bfU^{-1}\bfx\right\|_{2}\left(\left\|\bfU^{-1}\bfx\right\|_{2}-1\right)}
  {\left\|\bf{U}^{-2}\bfx\right\|_{2}}.
\end{equation}
Then, the coarse-level shape error of a point $\bfx$ in global coordinates  with signed distance label $d$ is defined as:
\begin{equation}
  \label{eq:e_g}
  e_{\bfphi}(\bfx,d,\bfT,\delta\bfz) \triangleq \rho(s h(\bfP\bfT \underline{\bfx}, g_{\bfphi}(\bfz+\delta\bfz)) - d).
\end{equation} 
%

During training, the object transformation is fixed to be the canonical coordinate frame $\bfT = \bfI_4$ because the training point-cloud data is collected directly in the object frame. The regularization term $e_r(\delta\bfz)$ in \eqref{eq:cost_function} is defined as the KL divergence between the distribution of $\delta\bfz$ and a standard normal distribution \cite{hao2020dualsdf}.

\begin{figure}[t]
    \centering
    \includegraphics[width=\linewidth]{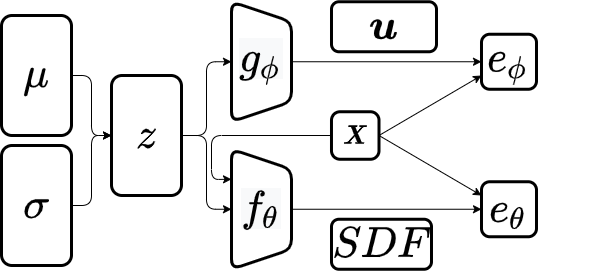}
    \caption{Overview of our ELLIPSDF bi-level object shape model. A latent shape code, $\bfz$, with distribution $\calN(\bfmu,\diag(\bfsigma)^2)$ is shared by a coarse shape decoder $g_\phi$, providing an ellipsoid shape description, and a fine shape decoder $f_\theta$, providing an SDF shape description. During training, the decoder parameters $\phi$ and $\theta$ are optimized by minimizing the errors between the SDF values of the training points $\bfx$, obtained close to the object surface, and the coarse and fine shape models.}
    \label{fig:two_level_model}
\end{figure}

\subsection{Joint Pose and Shape Optimization with an ELLIPSDF Model}
\label{sec:shape_pose_inference}

This section describes how a trained ELLIPSDF model is used to initialize and optimize the pose and shape of a new object instance at test time.


{\vspace{1ex}\bf \noindent Initialization: }%
We follow \cite{crocco2016structure, rubino20173d, gay2018visual} to initialize the $\text{SIM}(3)$ scale and pose of an observed object, relying on its coarse ellipsoid shape representation. We fit ellipses to the pixel-wise segmentation $\Omega_k^2$ of an object at each time $k$:
\begin{equation} \label{eq:ellipse_fit}
    \crl{ \bfq \in \Omega^2 \mid (\bfq - \bfc_k)^\top \bfE_k^{-1} (\bfq - \bfc_k) \leq 1},
\end{equation}
where the center and symmetric matrix are obtained as $\bfc_k = \frac{1}{|\Omega_k^2|}\sum_{\bfp \in \Omega_k^2} \bfp$ and $\bfE_k = \frac{2}{|\Omega_k^2|}\sum_{\bfp \in \Omega_k^2} (\bfp-\bfc_k)(\bfp-\bfc_k)^\top$. The axes lengths are the eigenvalues $\lambda_{0}$, $\lambda_{1}$ of $\bfE_k$. The 2D quadric surface corresponding to the ellipse in \eqref{eq:ellipse_fit} and its dual are defined by the matrix $\bfH_k$ and its inverse $\bfH_k^*$:
\begin{equation*}
    \scaleMathLine{\bfH_k = \begin{bmatrix} \bfE_k^{-1} & -\bfE_k^{-1}\bfc_k \\ -\bfc_k^\top \bfE_k^{-1} & \bfc_k^\top \bfE_k^{-1} \bfc_k - 1\end{bmatrix}, \;\; \bfH_k^* = \begin{bmatrix}\bfE_k - \bfc_k\bfc_k^\top & -\bfc_k \\ -\bfc_k^\top & -1\end{bmatrix}.}
\end{equation*}

An ellipsoid in dual quadric form $\bfQ^*$ in global coordinates and its conic projection $\bfH_k^*$ in image $k$ are related by $\beta_{k} \mathbf{H}_{k}^*=\mathbf{P} \bfC_k^{-1} \mathbf{Q}^* \bfC_k^{-\top} \mathbf{P}^{\top}$ defined up to a scale factor $\beta_{k}$. This equation can be rearranged to $\beta_{k} \mathbf{h}_{k} = \mathbf{G}_k\mathbf{v}$, where $\mathbf{h}_{k} = \operatorname{vech}(\mathbf{H}_{k}^*)$, $\mathbf{h}_{k} \in \mathbb{R}^6$, $\mathbf{v} = \operatorname{vech}(\mathbf{Q}^*)$ and $\mathbf{v} \in \mathbb{R}^{10}$. The operator $\operatorname{vech}$ serializes the lower triangular part of a symmetric matrix, and $\mathbf{G}_k$ is a matrix that depends on $\mathbf{P} \bfC_k^{-1}$. The explicit form of $\mathbf{G}_k$ is derived in (5) in \cite{rubino20173d}. 
Next, a least squares system is constructed from the multi-view observations. By stacking all observations, we obtain $\mathbf{M} \mathbf{w} = \bf0$, where $\mathbf{w} = (\mathbf{v}, \beta_1,\ldots,\beta_k)^\top$, and $\bfM$ is defined in (8) in  \cite{rubino20173d}.  
This leads to a least squares system:
\begin{equation}
\label{eq:ellipsoid_lsq}
\hat{\mathbf{w}}=\arg \min _{\bfw}\left\|\mathbf{M} \mathbf{w}\right\|_{2}^{2} \quad \text { s.t. } \quad\|\mathbf{w}\|_{2}^{2}=1,
\end{equation}
which can be solved by applying SVD to $\mathbf{M}$, and taking the right singular vector associated to the minimum singular value. The constraint $\|\mathbf{w}\|_{2}^{2}=1$ avoids a trivial solution. The first $10$ entries of $\hat{\mathbf{w}}$ are $\hat{\mathbf{v}}$, which is a vectorized version of the dual ellipsoid $\hat{\mathbf{Q}}^*$ in the global frame. To avoid degenerate quadrics, a variant of the least squares system in \eqref{eq:ellipsoid_lsq} is proposed in \cite{gay2018visual}, which constrains the center of the ellipse and the reprojection of the center of the 3D ellipsoid to be close. Thus, we modify $\bfM$ using the version in (9) in \cite{gay2018visual} to improve the estimation.

The object pose $\hat{\bfT}^{-1}$ can be recovered by relating the estimated ellipsoid $\hat{\mathbf{Q}}^*$ in global coordinates to the ellipsoid $\bfQ^*_{\bfu}$ in the canonical coordinate frame predicted by the coarse shape decoder $\bfu = g_{\bfphi}(\bfz)$ using the average class shape $\bfz$: 
\begin{equation*}
\begin{aligned}
\hat{\bfQ}^*\! =
\hat{\mathbf{T}}^{-1} \bfQ_{\bfu}^* \hat{\mathbf{T}}^{-\top}\!\!=
\begin{bmatrix} 
\hat{s}^2 \hat{\mathbf{R}} \bfU\bfU^\top \hat{\mathbf{R}}^\top -  \hat{\bft} \hat{\bft}^\top & - \hat{\bft} \\ -\hat{\bft}^\top & -1
\end{bmatrix}.
\end{aligned}
\end{equation*}
The translation $\hat{\bft}$ can be recovered from the last column of $\hat{\bfQ}^*$.
To recover the rotation, note that $\bfA \triangleq \bfP\hat{\mathbf{Q}}^*\bfP^\top  + \hat{\bft} \hat{\bft}^\top = \hat{s}^2\hat{\bfR}\bfU\bfU^\top\hat{\bfR}^\top$ is a positive semidefinite matrix. Let its eigen-decomposition be $\bfA = \bfV\bfY\bfV^\top$, where $\bfY$ is a diagonal matrix containing the eigenvalues of $\bfA$. Since $\bfU\bfU^\top$ is diagonal, it follows that $\hat{\bfR} = \bfV$, while the scale $\hat{s}$ is obtained as $\hat{s} = \frac{1}{3} \sqrt{\tr(\bfU^{-1}\bfY \bfU^{-\top})}$.
%
%
Note that although the $\text{SIM}(3)$ pose could also be recovered from the object point cloud, other outlier rejection methods are required \cite{wu2020eao} when the point cloud is noisy.


{\vspace{1ex}\bf \noindent Optimization: }%
Given the initialized instance transformation $\hat{\bfT}$ and initial shape deformation $\delta\hat{\bfz} = \mathbf{0}$, we solve the joint pose and shape optimization in \eqref{eq:test_optimization} via gradient descent. Note that the decoder parameters $\bftheta$, $\bfphi$ and the mean category shape code $\bfz$ are fixed during online inference. Since $\bfT$ is an element of the $\text{SIM}(3)$ manifold, the gradients and gradient steps need to be computed by projecting to the tangent $\text{sim}(3)$ vector space and retracting back to $\text{SIM}(3)$. We introduce local perturbations $\bfT = \exp\prl{\bfxi_\times} \hat{\bfT}$, $\delta\bfz = \delta\tilde{\bfz} + \delta\hat{\bfz}$ and derive the Jacobians of the error function in \eqref{eq:cost_function} with respect to $\bfxi$ and $\delta\tilde{\bfz}$.


\begin{proposition}
\label{prop:sdf-sim3-jacobians}
The Jacobian of $e_{\bftheta}$ in \eqref{eq:e_k} with respect to the transformation perturbation $\bfxi \in \mathfrak{sim}(3)$ is:
\begin{equation*}
\scaleMathLine[1]
{
  \begin{aligned}
    \frac{\partial e_{\bftheta}}{\partial \bfxi} 
    &= 
    \frac{\partial \rho(r)}{\partial r}
    \prl{
      \hat{s} [{\bf0}_6,1] f_{\bftheta} (\bfx,\delta\hat{\bfz})
      + 
      \hat{s} \nabla_{\bfx} f_{\bftheta}(\bfx,\delta\hat{\bfz})^\top
      \bfP \brl{\hat{\bfT} \underline{\bfx}}^\odot 
    }
    \\
    \frac{\partial e_{\bftheta}}{\partial \delta\tilde{\bfz}} 
    &= 
    \frac{\partial \rho(r)}{\partial r}
    \hat{s} \nabla_{\bfz} f_{\bftheta}(\bfx,\delta\hat{\bfz}), 
    \end{aligned}
}
\end{equation*}
where $f_{\bftheta}(\bfx,\delta\hat{\bfz}) = f_{\bftheta}(\bfP\hat{\bfT} \underline{\bfx}; \bfz+\delta\hat{\bfz})$ is defined in \eqref{eq:e_k} and $\frac{\partial \rho(r)}{\partial r}$ is the derivative of the Huber term in \eqref{eq:huber_loss} evaluated at $r = \hat{s} f_{\bftheta}(\bfx,\delta\hat{\bfz}) - d$:
\[
  \frac{\partial \rho(r)}{\partial r}
  =\left\{\begin{array}{ll}
    r & |r| \leq \delta \\
    \text{sign}(r)\delta  & \text{ else}. 
    \end{array}\right.
\] 
The operator $\underline{\bfx}^\odot$ is defined as:
\begin{equation*}
\underline{\bfx}^\odot \triangleq \begin{bmatrix} \bfI_3 & -\bfx_\times & \bfx\\ \mathbf{0}^\top & \mathbf{0}^\top & 0 \end{bmatrix} \in \mathbb{R}^{4 \times 7}. 
\end{equation*}
\end{proposition}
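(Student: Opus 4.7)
The plan is to apply the chain rule to $\rho$ once and then differentiate the residual $r = s\,f_{\bftheta}(\bfP\bfT\underline{\bfx}; \bfz+\delta\bfz) - d$ in the two places where the pose perturbation enters: through the SIM(3) scale factor $s$ multiplying $f_{\bftheta}$, and through the transformed query point $\bfP\bfT\underline{\bfx}$ fed to the decoder. The Huber factor $\partial\rho/\partial r$ drops out immediately by inspecting \eqref{eq:huber_loss} case by case and matches the stated piecewise formula; the remaining work is to evaluate $\partial r/\partial\bfxi$ and $\partial r/\partial\delta\tilde{\bfz}$ at the linearization point $(\hat{\bfT},\delta\hat{\bfz})$, after which a product-rule combination delivers both claims.

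For the transformation Jacobian I would adopt the left perturbation $\bfT = \exp(\bfxi_\times)\hat{\bfT}$ with $\bfxi = (\bfrho^\top,\bftheta^\top,\sigma)^\top$ and $\bfxi_\times$ as in \eqref{eq:LieAlgebras}. The scale component extracted from $\exp(\bfxi_\times)\hat{\bfT}$ is $e^{\sigma}\hat{s}$, which gives $\partial s/\partial\bfxi\big|_{\bfxi=0} = \hat{s}\,[\mathbf{0}_6,1]$; multiplied by the unperturbed $f_{\bftheta}(\bfx,\delta\hat{\bfz})$ this produces the first summand. For the point contribution, the chain rule gives $\partial f_{\bftheta}/\partial\bfxi = \nabla_{\bfx} f_{\bftheta}(\bfx,\delta\hat{\bfz})^\top\cdot\partial(\bfP\exp(\bfxi_\times)\hat{\bfT}\underline{\bfx})/\partial\bfxi$, and the first-order reduction $\exp(\bfxi_\times)\approx\bfI_4+\bfxi_\times$ brings the inner derivative down to computing $\bfP\bfxi_\times\underline{\bfy}$ with $\underline{\bfy} = \hat{\bfT}\underline{\bfx}$. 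Writing out the $\mathfrak{sim}(3)$ block in \eqref{eq:LieAlgebras} and using $\bftheta_\times\bfy = -\bfy_\times\bftheta$ gives the identity $\bfxi_\times\underline{\bfy} = \underline{\bfy}^\odot\bfxi$ with $\underline{\bfy}^\odot$ exactly the $4\times 7$ matrix appearing in the statement; multiplying by $\hat{s}$ and the spatial gradient then yields the second summand.

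The latent-code Jacobian is immediate: only $f_{\bftheta}(\,\cdot\,;\bfz+\delta\hat{\bfz}+\delta\tilde{\bfz})$ depends on $\delta\tilde{\bfz}$, since the scale $\hat{s}$ and pose $\hat{\bfT}$ are held fixed, so $\partial r/\partial\delta\tilde{\bfz} = \hat{s}\,\nabla_{\bfz} f_{\bftheta}(\bfx,\delta\hat{\bfz})$, and multiplying by $\partial\rho/\partial r$ closes the claim. The main obstacle is the $\mathfrak{sim}(3)$ $\odot$-identity: relative to the standard $\mathfrak{se}(3)$ case, one must carefully bookkeep the extra scale generator $\sigma\bfI$, which contributes in two distinct and non-overlapping ways, namely to the scalar prefactor derivative $\partial s/\partial\sigma = \hat{s}$ that produces the $[\mathbf{0}_6,1]f_{\bftheta}$ term, and to the action on the query point through $\sigma\bfI\bfy$, which becomes the last column $\bfy$ of $\underline{\bfy}^\odot$. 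Once this separation is verified, the remaining algebra is mechanical.
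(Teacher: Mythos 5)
Your proposal is correct and follows essentially the same route as the paper's proof: chain rule through the Huber term, product rule on $s\,f_{\bftheta}$, the left perturbation $\bfT=\exp(\bfxi_\times)\hat{\bfT}$ with $\partial s/\partial\bfxi = \hat{s}\,[\mathbf{0}_6,1]$, and the first-order expansion $\exp(\bfxi_\times)\approx \bfI+\bfxi_\times$ combined with the identity $\bfxi_\times\underline{\bfy}=\underline{\bfy}^{\odot}\bfxi$. You in fact spell out the verification of the $\odot$-identity (via $\bftheta_\times\bfy=-\bfy_\times\bftheta$ and the $\sigma\bfI\bfy$ column) more explicitly than the paper does.
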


\begin{proof}
Using the chain rule and the product rule:
\begin{equation*}
\begin{aligned}
  \frac{\partial e_{\bftheta}}{\partial\bfxi} 
  = 
  \frac{\partial e_{\bftheta}}{\partial r}
  \frac{\partial r}{\partial\bfxi}
  = 
    \frac{\partial e_{\bftheta}}{\partial r}
    \prl{
      \frac{\partial s}{\partial\bfxi}
      f_{\bftheta} (\bfx,\delta\bfz)
      +
      s
      \frac{\partial f_{\bftheta}}{\partial{}^{}_{O}\bfx}
      \frac{\partial {}^{}_{O}\bfx}{\partial\bfxi}
    }, 
\end{aligned}
\end{equation*}
where ${}^{}_{O}\bfx = \bfP\bfT \underline{\bfx}$ is a point in the object frame. We have $
\frac{\partial s}{\partial\bfxi}
  = 
  e^\sigma [{\bf0}_6,1]
  = s [{\bf0}_6,1]
$. 
The term $s\frac{\partial f_{\bftheta}}{\partial{}^{}_{O}\bfx}$ is the gradient of the fine-level SDF decoder with respect to the input $s \nabla_{\bfx} f_{\bftheta}(\bfx, \delta\bfz)$, which could be obtained from auto-differentiation. Finally, we have:
\begin{equation*}
\begin{aligned}
{}^{}_{O} \bfx &= \bfP \bfT \underline{\bfx} \approx 
\bfP (\bfI + \bfxi_\times) \hat\bfT \underline{\bfx} \\
&=
\bfP \hat\bfT \underline{\bfx}
+
\bfP \bfxi_\times \hat\bfT \underline{\bfx} \\ 
&= 
\bfP \hat\bfT \underline{\bfx}
+
\underbrace{\bfP [\hat\bfT \underline{\bfx}]^\odot}_{\frac{\partial {}^{}_{O} \bfx}{\partial \bfxi}} \bfxi. 
\end{aligned}\qedhere
\end{equation*}
\end{proof}

In the second equality in Prop.~\ref{prop:sdf-sim3-jacobians}, the term $\frac{\partial \rho(r)}{\partial r} \hat{s} \nabla_{\bfz} f_{\bftheta}(\bfx, \delta\hat{\bfz})$ is the gradient of the fine-level SDF loss with respect to the input $\bfz$ and can be obtained via auto-differentiation. The Jacobians of the coarse-level SDF error $\frac{\partial e_{\bfphi}}{\partial \bfxi}$, $\frac{\partial e_{\bfphi}}{\partial \delta \tilde{\bfz}}$ can be obtained in a similar way. 


After obtaining the Jacobians, the pose and latent shape code can be optimized via:
\begin{equation*}
\begin{aligned}
\bfT^{i+1} &\triangleq 
\exp \prl{- \eta_1 
\frac{\partial e(\bfT,\delta \bfz, \bftheta^*, \bfphi^* ; \crl{\calX_k(\bfp)})}{\partial \bfxi}}
\bfT^{i}
\\
\delta \bfz^{i+1} &\triangleq 
\delta \bfz^{i} - \eta_2 
\prl{\frac{\partial e(\bfT,\delta \bfz, \bftheta^*, \bfphi^* ; \crl{\calX_k(\bfp)})}{\partial \delta \bfz}}, 
\end{aligned}
\end{equation*}
where $\eta_1, \eta_2$ are step sizes, $\delta \bfz^0 = \mathbf{0}$, and $\bfT^0 = \hat{\bfT}$ is obtained from the initialization. During optimization, we add regularization $e_r(\delta\bfz) = \|\delta\bfz\|_2^2$ to restrict the amount of latent code deformation.

\section{Evaluation}
\label{sec:evaluation}

\begin{figure*}[t]
    \centering
    \includegraphics[width=0.85\linewidth]{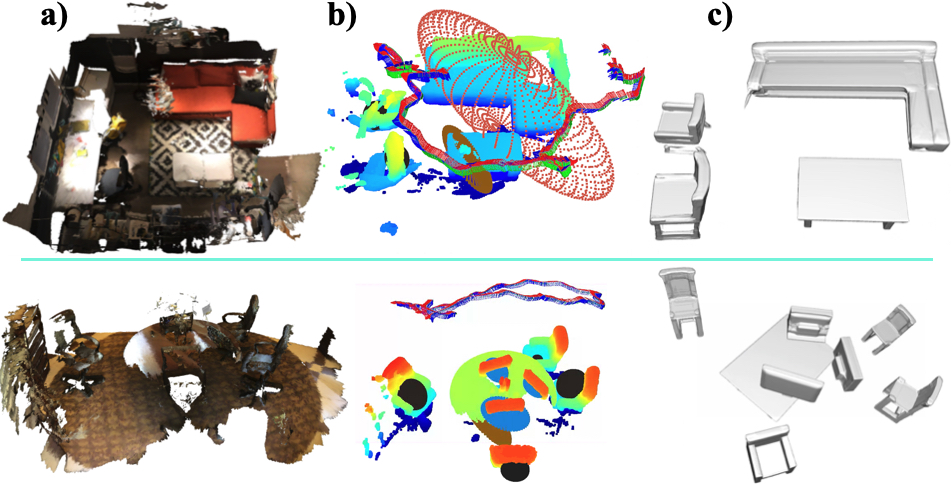}
    \caption{Qualitive results. Column a): Ground-truth scene in ScanNet Sequence $0518$ (upper row) and $0314$ (lower row). Column b): The RGB axes are the camera trajectory, point clouds are the ones obtained from RGB-D sensor with added pesudo points, and the ellipsoids (black for chair, red for sofa, blue for monitor, brown for table) are the initialized objects. Column c): Reconstructed meshes using ELLIPSDF, rendered from the optimized latent code and pose.}
    \label{fig:qual_results}
  \end{figure*}

\subsection{Training Details}
\label{sec:training_details}
The ELLIPSDF decoder model is trained on synthetic CAD models from ShapeNet \cite{chang2015shapenet}. Each model's scale is normalized to be inside a unit sphere. We sample points and calculate their SDF values using a uniform distribution in the unit sphere for training the coarse-level shape decoder $g_{\bfphi}$. Another set of points that are close to the model surface are sampled for training the fine-level shape decoder $f_{\bftheta}$.

The following setting were used to train the decoder networks and the latent shape code $\bfz$. We use the Adam optimizer with initial learning rate $5\times 10^{-4}$,  $0.5$ ratio decay every 300/700 epochs for the coarse and fine level networks separately. 
The total epoch number is $1500$. 
The latent code dimension is $64$, and the network structure follows the model in DualSDF \cite{hao2020dualsdf}. 

\subsection{Qualitative Results}

We evaluate ELLIPSDF on the ScanNet dataset \cite{dai2017scannet}, which provides 3D scans captured by a RGB-D sensor of indoor scenes with chairs, tables, displays, etc. We segment out objects from scene-level mesh using provided instance labels, and sample points from object meshs to generate point observations. 
Visualizations of shape optimization for a chair are shown in Fig.~\ref{fig:fig1_refined}. Optimization step improves the scale and shape estimates notably, e.g. by transforming the four-leg mean shape into an armchair. 
Larger scale qualitative results are shown in Fig.~\ref{fig:qual_results}, demonstrating the effectiveness of joint shape and pose optimization. Optimized poses are closer to the ground-truth, and optimized shapes resemble the objects better than simple primitive shapes such as cuboids or quadrics that lacks fine details. 
For example, the successful reconstruction of an angle sofa is illustrated in the upper row in Fig.~\ref{fig:qual_results}, which deforms from an initial mean sofa shape that does not have an angle. 
\begin{figure}[t]
    \centering
    \includegraphics[width=1\linewidth]{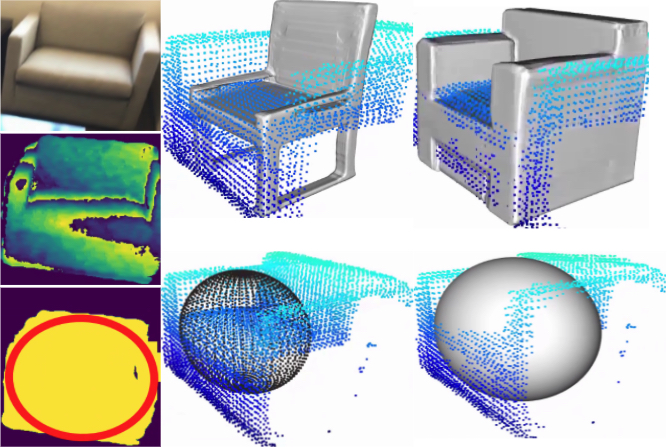}
    \caption{Intermediate ELLIPSDF stages. First column: RGB image, depth image, instance segmentation (yellow), fitted ellipse (red) for a chair in ScanNet scene $0461$. Second column: mean shape and ellipsoid with initialized pose. Third column: optimized fine-level and coarse-level shapes with optimized pose.}
    \label{fig:fig1_refined}
\end{figure}
ELLIPSDF is also able to deal with partial observations as seen in the lower row in Fig.~\ref{fig:qual_results}. Although the observed point clouds of the displays and the chairs are sparse, our approach still reconstructs those objects successfully. Nevertheless, the reconstruction is a square instead of rounded for the table due to a severe occlusion of the observation that only less than half of the table is observed. 

\subsection{Quantitative Results}

This section presents quantitative evaluation against other methods regarding both pose and shape estimation accuracy. We also present ablation studies to showcase the improvement of the optimization over initialization-only results, and the bi-level model over a one level model. 


\begin{table}[t]
\centering
\caption{Quantitative results for pose estimation on ScanNet~\cite{dai2017scannet}.}
\scalebox{0.78}{
    \begin{tabular}{c|c|c|c} 
    \hline
    Scan2CAD \cite{avetisyan2019scan2cad} & Vid2CAD \cite{maninis2020vid2cad} & ELLIPSDF (init) & ELLIPSDF (opt) \\
    \hline
    31.7 & 38.3 & 31.5 & \textbf{39.6} \\
    \hline
    \end{tabular}}
    \label{tab:pose}
\end{table}

{\vspace{1ex}\bf \noindent Evaluation on Object Pose: }%
We obtain the ground-truth object pose annotations from Scan2CAD \cite{avetisyan2019scan2cad} and follow the pose evaluation metrics it defines, which decomposes a pose $\bfT \in \text{SIM}(3)$ into rotation $\bfq$, translation $\bfp$ and scale $\bfs$. For an accurate pose estimation, the error thresholds for translation, rotation, and scales are set as $0.2$, $20^{\circ}$ and $20\%$ respectively with respect to the ground-truth pose.
The pose evaluation is presented in Tab.~\ref{tab:pose}, in which ELLIPSDF (init) refers to the initialization-only step in Sec.~\ref{sec:shape_pose_inference}, whereas ELLIPSDF (opt) refers using both the initialization and optimization steps in Sec.~\ref{sec:shape_pose_inference}.
The last two columns in Tab.~\ref{tab:pose} show that adding optimization step using SDF residuals improves the estimation by the initialization-only variant, due to the additional SDF residuals to help estimate pose. 
Moreover, ELLIPSDF (opt) outperforms both Scan2CAD and Vid2CAD, which demonstrates the superiority of ELLIPSDF that employs a primitive ellipsoid shape tailored for pose and scale estimation.


{\vspace{1ex}\bf \noindent Evaluation on Object Shape: }%
We evaluate ELLIPSDF for shape prediction on ScanNet \cite{dai2017scannet} dataset in Tab.~\ref{tab:fitting_performance}. Instead of single object evaluation in FroDO \cite{runz2020frodo}, we evaluate on multiple objects, which is harder than the single-object-scene due to clustering and partial observations. The large scale evaluation verifies that our method can generalize across different sequences and objects.
The object point cloud sampled from the object mesh from~\cite{avetisyan2019scan2cad} is used as the ground truth $\calS_{gt}$, and the estimated point cloud $\calS_{est}$ is generated from the optimized latent code $\bf{z} + \delta \bf{z}$. 
Given the ground-truth point cloud $\calS_{gt}$ and ELLIPSDF point cloud $\calS_{est}$ for an object, the fitting rate with inlier ratio is
\begin{equation} 
\begin{aligned} \label{eq:fitting}
fit(\calS_{est}, \calS_{gt}) &= \frac{\vert \calS_{close} \vert}{\vert \calS_{est} \vert}, \\
\calS_{close} &= \{\bfv \in \calS_{est}: d_f(\bfv, \calS_{gt}) < \lambda \}, 
\end{aligned}
\end{equation}
where $\lambda = 0.2 (m)$. A distance function $d_f(\cdot, \cdot)$ is utilized to measure the distance between a point $\bfv$ and a point cloud $\calS$, which is the distance from the closest point $\bfu \in \calS$ to the point $\bfv$. In CAD-Deform \cite{ishimtsev2020cad}, the distance function is set to be L1 distance, while we use L2 distance.

\begin{table}[t]
    \centering
    \caption{Quantitative results for shape evlaution on ScanNet\cite{dai2017scannet}.}\label{tab:fitting_performance}
    \scalebox{0.8}{
    \begin{tabular}{l|c|c|c|c|c}
    \hline
     \textbf{Method} & cabinet & chair & display & table   & avg. \\ 
    \# intances  & 132 & 820 & 209 & 146   & 327  \\ \hline\hline
    ELLIPSDF (fine)  & 88.4 & 88.3 & 90.6 & 76.2       & 85.9 \\ 
    ELLIPSDF (coarse+fine)  & \textbf{91.0} & \textbf{90.6} & \textbf{96.9} & \textbf{77.3}   & \textbf{89.0} \\ \hline
    \end{tabular}
    }
\end{table}

We run ELLIPSDF (fine) and ELLIPSDF (coarse+fine) on 150 validation sequences on ScanNet \cite{dai2017scannet}, where ELLIPSDF (fine) means only the fine level SDF residual is used by setting $\gamma = 0$ in \eqref{eq:cost_function}, and ELLIPSDF (coarse+fine) means the bi-level SDF residuals are used. For each optimized object, we calculate the fitting rate and then average across all instances. In Tab.~\ref{tab:fitting_performance}, we show the number of instances and average fitting rates for 4 object classes.
ELLIPSDF (coarse+fine) achieves better results than ELLIPSDF (fine) across all classes, demonstrating an average 3\% boost of fitting rate with the assistance of coarse model, reaching nearly 90\% accuracy. The results indicate the effectiveness of the coarse level error function for improving the scale estimation. 


\begin{table}[t]
\centering
\caption{Comparison of 3D detection results on ScanNet \cite{dai2017scannet}.}
\small
    \begin{tabular}{l|c|c|c} 
    \hline
    mAP @ IoU=0.5 & Chair & Table & Display \\
    \hline\hline
    FroDO \cite{runz2020frodo} & $0.32$ & $0.06$ & $0.04$\\
    MOLTR \cite{li2020mo} & $0.39$ & $0.06$ & $0.10$ \\
    ELLIPSDF (fine) & 0.42 & 0.26 & 0.25 \\
    ELLIPSDF (coarse+fine) & \bf{0.43} & \bf{0.27} & \bf{0.31} \\
    \hline
    \end{tabular}
    \vspace{0.2cm}
    \label{tab:detection_moltr}
\end{table}

{\vspace{1ex}\bf \noindent Evaluation on 3D IoU: }%
For a quantitative evaluation on pose estimation, our approach is compared with FroDO~\cite{runz2020frodo} and MOLTR~\cite{li2020mo} on ScanNet~\cite{dai2017scannet}. The ground-truth object poses and shapes are from Scan2CAD \cite{avetisyan2019scan2cad}, whereas the estimated 3D bounding box is generated from the estimated point cloud.
The evaluation metric is same as \cite{li2020mo}, i.e. mean Average Precision (mAP), and the IoU threshold is 0.5. The results are shown in Tab. \ref{tab:detection_moltr}. 
First, we compare the bi-level model against the one-level model. From the last two rows in Tab. \ref{tab:detection_moltr}, ELLIPSDF (coarse+fine) is superior than ELLIPSDF (fine) in terms of 3D IoU, and thus demonstrates that the bi-level model is beneficial by providing additional cues to constrain the pose and shape. The improvement is more significant for smaller objects, e.g. the displays. This may be explained by the fact that the initialization error is relatively larger for smaller objects, and thus requires a coarse shape residual to confine its pose. 
Moreover, ELLIPSDF outperforms both FroDO and MOLTR by a large margin for two probably reasons. Firstly, 3D point clouds are used in the observation for ELLIPSDF, while the other two only rely on 2D observations. Secondly, ELLIPSDF computes coarse level SDF residuals using a primitive shape to aid the estimation of pose and shape scale, whereas the other methods use SDF residuals computed from fine shape details. 

\section{Conclusion}
\label{sec:conclusion}

This work proposes ELLIPSDF, which a novel semantic mapping approach for RGB-D sensors using a compact, shared latent representation for a bi-level object model to achieve joint pose and shape optimizaiton. Evaluation results on large-scale dataset demonstrate the superiority of ELLIPSDF compared with other approaches.  
A future research direction is to integrate ELLIPSDF into the pose graph optimization for key-frame based SLAM.

{\small
\bibliographystyle{ieee}
\bibliography{egbib}
}

\newpage 
\section*{Supplementary Material}

\subsection*{Trained Object Models}

This section provides additional visualizations for the trained object models. Training loss for the chair category is visualize in Fig.~\ref{fig:training_loss_chair}, which shows the loss is decreasing and stabilizes around 40,000 epochs. 

Fig.~\ref{fig:trained_model_chair} visualizes the rendering results for some chairs in the training set. It shows that the scale of the primitive-based representation varies proportionally with the high-resolution representation. 

\begin{figure}[thp!]
    \centering
    \includegraphics[width=\linewidth]{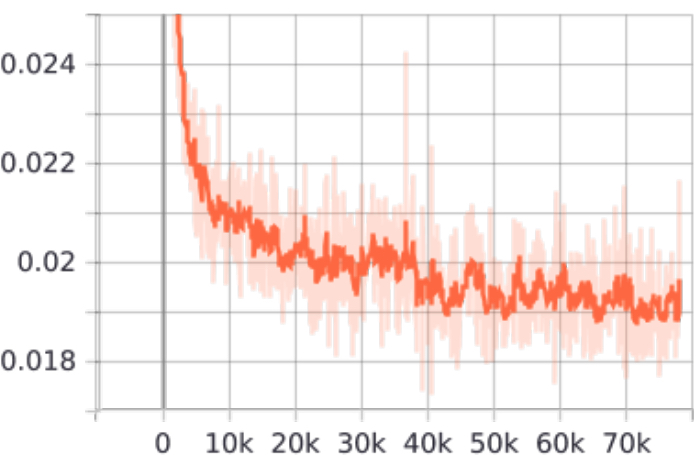}
    \caption{Visualization of the training loss for chairs.}
    \label{fig:training_loss_chair}
\end{figure}

Fig.~\ref{fig:trained_model_sofa} visualizes the rendering results for sofas in the training set. There is a lack of shape variation since the majority of sofas have similar structure. Nevertheless, the ellispoid for the angle sofa is still different with that of other sofas. 

\begin{figure}[thp!]
    \centering
    \includegraphics[width=\linewidth]{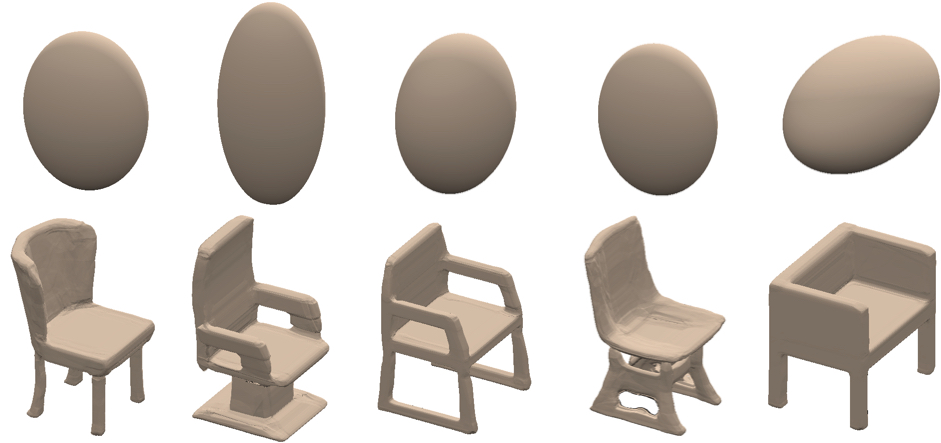}
    \caption{Visualization of the trained object model for chairs. Upper row: coarse ellipsoid shapes regressed from $g_{\bfphi}$ and $\bfz$. Lower row: SDF object model from $f_{\bftheta}$ and $\bfz$.}
    \label{fig:trained_model_chair}
\end{figure}

\begin{figure}[thp!]
    \centering
    \includegraphics[width=\linewidth]{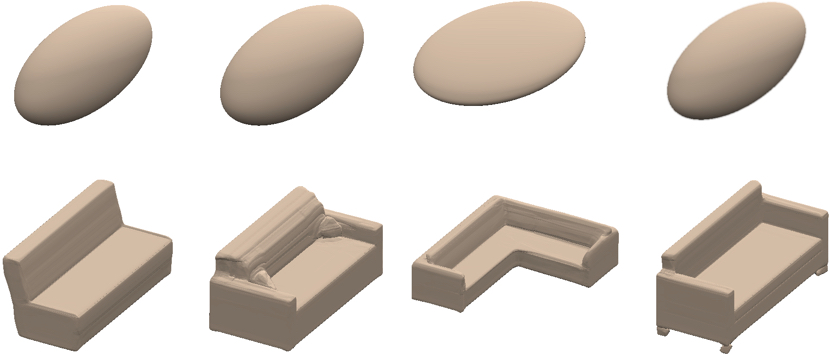}
    \caption{Visualization of the trained object model for sofas. Upper row: coarse ellipsoid shapes regressed from $g_{\bfphi}$ and $\bfz$. Lower row: SDF object model from $f_{\bftheta}$ and $\bfz$.}
    \label{fig:trained_model_sofa}
\end{figure}

Fig.~\ref{fig:trained_model_table} visualizes the rendering results for tables in the training set. Similar to sofas, the variation is limited due to similar table shapes. Nonetheless, the ellipsoid for the rounded table is different from the rest.

\begin{figure}[thp!]
    \centering
    \includegraphics[width=\linewidth]{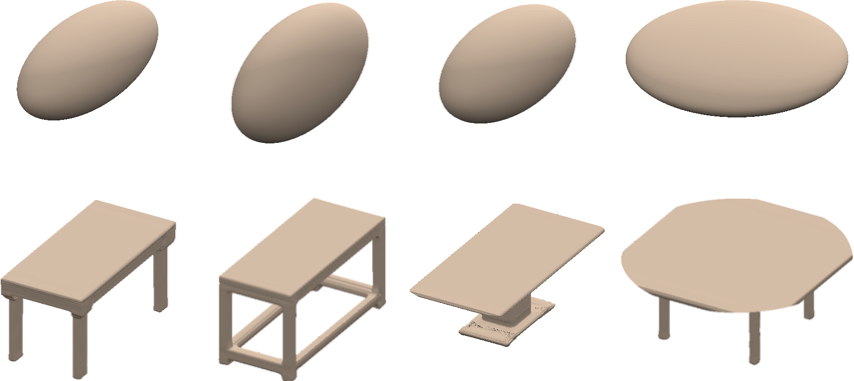}
    \caption{Visualization of the trained object model for tables. Upper row: coarse ellipsoid shapes regressed from $g_{\bfphi}$ and $\bfz$. Lower row: SDF object model from $f_{\bftheta}$ and $\bfz$.}
    \label{fig:trained_model_table}
\end{figure}

Fig.~\ref{fig:trained_model_trashbin} visualizes the rendering results for trashbins in the training set. It could be observed that the ellipsoid shape varies based on the object shape, for instance, the ellipsoid is enlongated for a tall trashbin.

\begin{figure}[thp!]
    \centering
    \includegraphics[width=\linewidth]{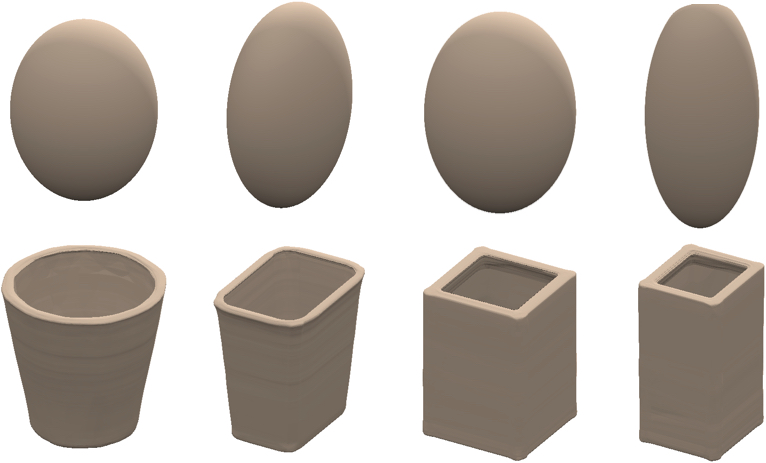}
    \caption{Visualization of the trained object model for trashbins. Upper row: coarse ellipsoid shapes regressed from $g_{\bfphi}$ and $\bfz$. Lower row: SDF object model from $f_{\bftheta}$ and $\bfz$.}
    \label{fig:trained_model_trashbin}
\end{figure}

\begin{figure}[thp!]
    \centering
    \includegraphics[width=\linewidth]{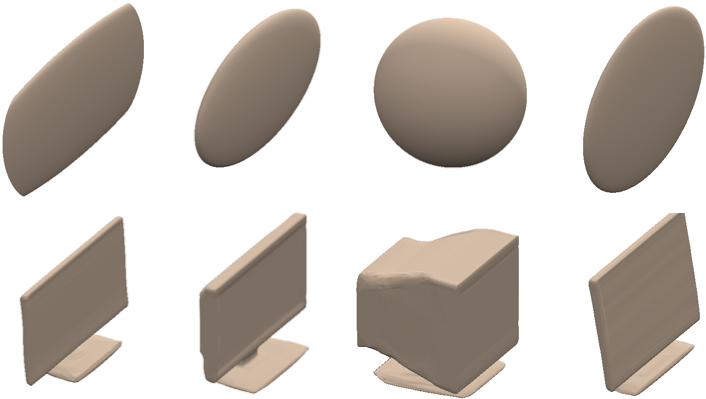}
    \caption{Visualization of the trained object model for displays. Upper row: coarse ellipsoid shapes regressed from $g_{\bfphi}$ and $\bfz$. Lower row: SDF object model from $f_{\bftheta}$ and $\bfz$.}
    \label{fig:trained_model_display}
\end{figure}

\begin{figure}[thp!]
    \centering
    \includegraphics[width=\linewidth]{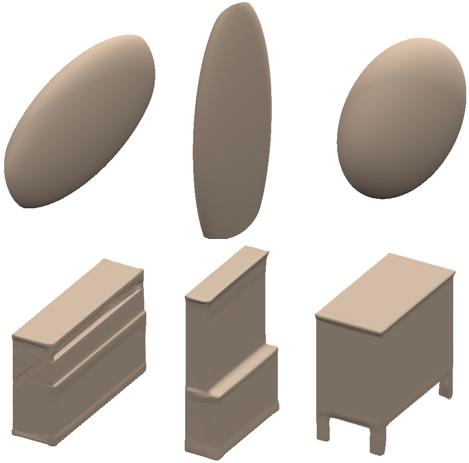}
    \caption{Visualization of the trained object model for cabinets. Upper row: coarse ellipsoid shapes regressed from $g_{\bfphi}$ and $\bfz$. Lower row: SDF object model from $f_{\bftheta}$ and $\bfz$.}
    \label{fig:trained_model_cabinet}
\end{figure}

Fig.~\ref{fig:trained_model_display} visualizes the rendering results for displays in the training set. The ellipsoid is rounded for the thicker display and is very thin for the rest. 

Fig.~\ref{fig:trained_model_cabinet} visualizes the rendering results for cabinets in the training set. The ellipsoid varies according to the different cabinet shapes.

\subsection*{More Qualitative Results on ScanNet}

This section presents more qualitative results on ScanNet~\cite{dai2017scannet}. 
Fig.~\ref{fig:scannet_qualitative_0077_01} shows a reconstruction with table, trashbins, and cabinet. The cabinet and trashbins are reconstructed well, as can be seen from the resulting meshes which resemble the original object shapes. However, the table is poorly reconstructed, since the shape is quite different and the pose is inaccurate. This is because the available observation in the scene for the table is very limited, as can be seen in the segmented mesh, which is insufficient for optimization.

A ScanNet scene with bookshelves and tables are shown in Fig.~\ref{fig:scannet_qualitative_0208_00}, to demonstrate the usefulness of the coarse and fine level residuals. The figure illustrates that the initialized object pose and shape are different from the actual scene, since the two bookshelves in the center are not parallel and are too small compared to the observation. In contrast, the bookshelves become larger after applying the fine level residual, which is more consistent with the observations. The reconstructions are further improved with both the coarse and fine level residuals, where the bookshelves become parallel. Moreover, the bottom bookshelf and the top right table also become thinner, which agrees more with the observation. 
This example clearly shows the effectiveness of the proposed bi-level model for joint object pose and shape optimization.  

\begin{figure}[thp!]
    \centering
    \includegraphics[width=\linewidth]{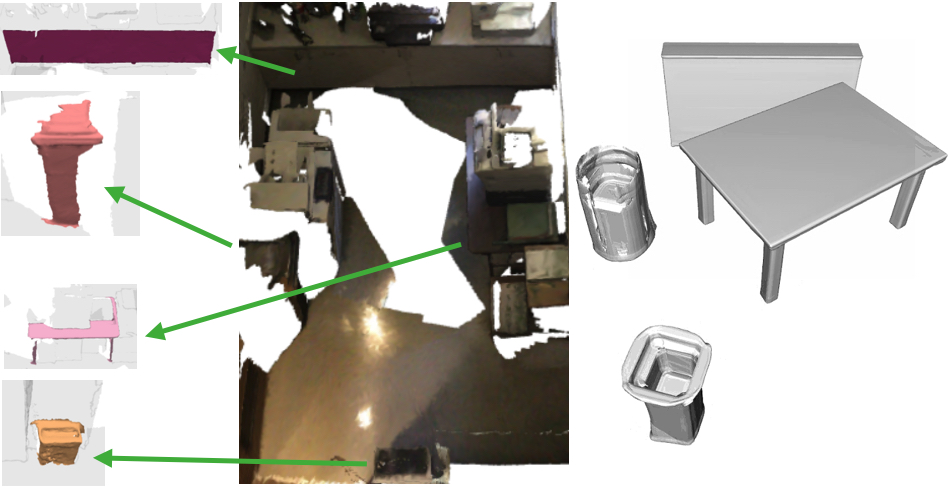}
    \caption{Visualization of the original scene and reconstructed objects for ScanNet scene $0077$. The green arrows point to the segmented mesh of the objects.}
    \label{fig:scannet_qualitative_0077_01}
\end{figure}

\begin{figure*}[thp!]
    \centering
    \includegraphics[width=\linewidth]{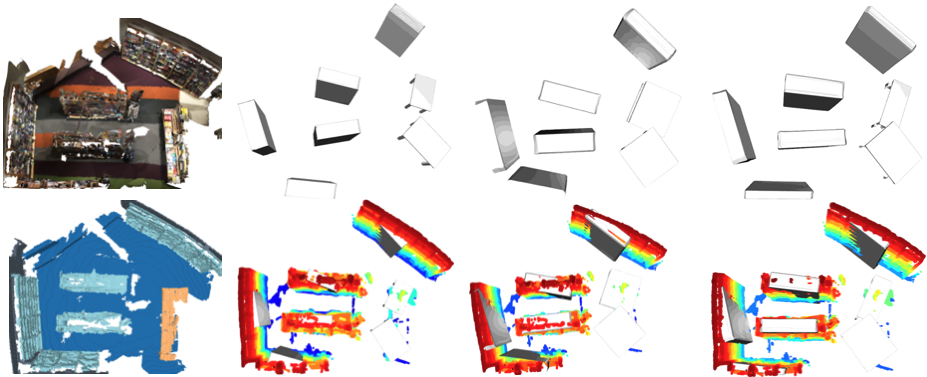}
    \caption{Visualization of the original scene and reconstructed objects for ScanNet scene $0208$. First row from left to right: original scene, reconstruction using initialized pose and mean categorical object shape, reconstruction using optimized pose and shape with fine level residual only, reconstruction using optimized pose and shape with both coarse and fine level residuals. Second row from left to right: original scene with bookshelves and tables highlighted in light blue and beige, the rest are reconstructions overlaid with object point clouds and added pseudo points.}
    \label{fig:scannet_qualitative_0208_00}
\end{figure*}

\subsection*{Pose Estimation Metric}

This section presents the metric used to evaluate the object pose, which follows Scan2CAD~\cite{avetisyan2019scan2cad}. 
We introduce the details on how to decompose a pose $\bfT \in \text{SIM}(3)$ into rotation $\bfq$, translation $\bfp$ and scale $\bfs$ and the error functions for each element separately.
For rotation and scale, $\bfR_s = \bfP\bfT\bfP^\top$:
\begin{equation}
\label{eq:pose_error}
\begin{aligned}
s_1 &= \| \bfR_s\bfe_1 \|_2 \quad 
s_2 = \| \bfR_s\bfe_2 \|_2 \quad 
s_3 = \| \bfR_s\bfe_3 \|_2, \\
\bfR\bfe_1 &= \frac{\bfR_s\bfe_1}{s_1} \quad 
\bfR\bfe_2 = \frac{\bfR_s\bfe_2}{s_2}
\quad 
\bfR\bfe_3 = \frac{\bfR_s\bfe_3}{s_3}. 
\end{aligned}
\end{equation}
Suppose $\boldsymbol{R}=\left\{m_{i j}\right\}, i, j \in[1,2,3]$, we transform it to quaternion $\bfq$ by 
\begin{equation}
\scaleMathLine[0.9]
{
\begin{aligned}
q_{0}=\frac{\sqrt{\operatorname{tr}(R)+1}}{2}, q_{1}=\frac{m_{23}-m_{32}}{4 q_{0}}, q_{2}=\frac{m_{31}-m_{13}}{4 q_{0}}, q_{3}=\frac{m_{12}-m_{21}}{4 q_{0}}. 
\end{aligned}
}
\end{equation}
Suppose the prediction and groundtruth are $\bfq_{pred}, \bfq_{gt}$, we compute the difference by 
\begin{equation}
\begin{aligned}
e_{\text{SO(3)}}(\bfq,\hat{\bfq}) := 2 \arccos (| \bfq_{gt}^\top \bfq_{pred} |). 
\end{aligned}
\end{equation}
Translation is $\bfp = \bfT[1:3, 4]$, and we compare the difference between prediction and groundtruth by 
\begin{equation}
\| \bfp_{pred} - \bfp_{gt} \|_2. 
\end{equation}
For scale percentage error, we compute it by 
\begin{equation}
100\times | \frac{1}{3} \sum_{i=1}^{3} \bar{s}_i - 1 |,
\end{equation}
where $\bar{s}_i = \frac{s_{pred}}{s_{gt}}$ for each of $s_1, s_2, s_3$ recovered from the $\text{SIM}(3)$ matrix. 

\subsection*{Timing}

\begin{table}[tph!]
    \centering
    \caption{ELLIPSDF timing breakdown (sec)}
    \label{tab:time}
    \scalebox{0.78}{
        \begin{tabular}{c|c|c|c|c} 
        \hline
        Init & Latent Code Opt & SIM(3) Opt & SDF Decoding & Meshing \\
        \hline
        0.04 & 0.13 & 0.58 & 1.38 & 2.34 \\
        \hline
        \end{tabular}}
\end{table}

Timing for one instance is provided in Table~\ref{tab:time}. \textit{Init} is the pose initialization in (14) for 100 views. \textit{Latent Code Opt} and \textit{SIM(3) Opt} are a single SGD step with respect to $\delta \bfz$ and $\bfT$ respectively using 10000 points as batch size. \textit{SDF Decoding} and \textit{Meshing} are optional steps that generate SDF predictions over $256^3$ points and apply Marching Cubes to generate a mesh. Our approach does not currently operate in real-time but it is more efficient than existing work. We will investigate how to accelerate the current slow python SIM(3) optimization.

\section*{Acknowledgments}
The first author would like to thank Kejie Li at University of Adelaide for helpful discussions.  

\end{document}